\newtheorem{ex}{Example}
\newcommand{\R}{\mathbb{R}}
\DeclareRobustCommand*\eg{\emph{e.g.,}\@\xspace}
\DeclareRobustCommand*\ie{\emph{i.e.,}\@\xspace}
\DeclareMathOperator*\argmin{arg\,min}
\newtheorem{rmk}{Remark}[section]
\newtheorem{proof}{Proof}
  \DeclareRobustCommand*\eg{\emph{e.g.,}\@\xspace}
  \DeclareRobustCommand*\ie{\emph{i.e.,}\@\xspace}
  \DeclareMathOperator*{\minimize}{minimize}
  \DeclareMathOperator*{\subject}{subj. to}
  \DeclareMathOperator*{\maximum}{max}
 \DeclareMathOperator*{\Ti}{\Omega}
 \newcommand{\x}[1]{\theta #1}
  \newcommand{\T}{\mathsf{T}}
 \newcommand{\TODO}[1]{{\color{red} \textbf{#1}}}
\newcommand{\CC}[1]{\varphi #1}
\begin{document}
\begin{frontmatter}
\title{Scalable  Anomaly Detection in Large  Homogenous Populations\thanksref{footnoteinfo}}
\thanks[footnoteinfo]{A preliminary version of this paper was presented in the 16th IFAC Symposium on System Identification, Brussels, Belgium, July 11-13, 2012.}
\author[Liu,Berk]{Henrik Ohlsson}\ead{ohlsson@isy.liu.se},
\author[Liu]{Tianshi Chen}\ead{tschen@isy.liu.se},
\author[Liu]{Sina  Khoshfetrat Pakazad}\ead{sina.kh.pa@isy.liu.se},
\author[Liu]{Lennart Ljung}\ead{ljung@isy.liu.se},
\author[Berk]{S. Shankar Sastry}\ead{sastry@eecs.berkeley.edu}

 \address[Liu]{Department of Electrical Engineering, Link\"oping University,
 SE-581 83 Link\"oping, Sweden}

 \address[Berk]{Department of Electrical Engineering and Computer
  Sciences, University of California at Berkeley, CA, USA}

\begin{abstract}
Anomaly detection in large populations is a challenging but highly
relevant problem. The problem is essentially a multi-hypothesis
problem, with a hypothesis for every division of the systems into
normal and anomal systems. The number of hypothesis grows rapidly
with the number of systems  and approximate solutions
become a necessity for any problems of practical interests.
In the current  paper we
take an optimization approach to this multi-hypothesis problem. We
first observe that the  problem is equivalent to a
non-convex combinatorial optimization problem.
We then relax the problem to a convex problem that can be solved distributively on the
systems and that stays computationally tractable as the
number of systems increase.
An interesting property of the proposed method is that it
 can  under certain conditions  be shown
 to give exactly the same result as the combinatorial multi-hypothesis
 problem  and  the relaxation is hence tight.
\end{abstract}
\begin{keyword} Anomaly detection, outlier detection, multi-hypothesis
  testing, distributed optimization,  system identification.  \end{keyword}
\end{frontmatter}

\section{Introduction}\label{sec:intro}
In this paper we study the following problem: We are given $N$
systems and we suspect that $k \ll N$ of them behave differently
from the majority. We do not know beforehand what the normal
behavior is, and we do not know which $k$ systems that behave
differently. This problem is known as an \emph{anomaly detection
problem} and has been discussed \eg in
\cite{Chandola:2009,Chu:11,Gorinevsky12}. It clearly has links to
\emph{change detection} (\eg
\cite{PattonFC:89,BassevilleN:93,Gustafsson:01}) but is different
because  the detection of anomalies is done by comparing systems
rather than looking for  changes over time.

The anomaly detection problem typically becomes very computationally demanding, and
it is  therefore of interest to study \emph{distributed solutions}. A
distributed solution is also motivated 
by that many anomaly detection
problems are spatially distributed and lack a central computational unit.

\begin{ex}[Aircraft Anomaly Detection]\label{ex:1}
In this example we consider the problem of detecting abnormally
behaving airplanes in a large homogenous fleet of
aircrafts. Homogenous here means that the normal aircrafts have
similar dynamics.  This is a very
relevant problem \cite{Chu:11,Gorinevsky12} and of highest interest for safety in
aeronautics. In fact, airplanes are constantly gathering data and
being monitored for this exact reason. In particular, so called
\textit{flight operations quality assurance} (FOQA) data are
collected by several airlines and used to improve their fleet's
safety.

As showed in \cite{Chu:11}, faults in the angle-of-attack channel
can be detected by studying the relation between the angle of
attack, the dynamic pressure, mass variation, the stabilizer
deflection angle, and the elevator deflection. The number of airplanes
in a fleet might be of the order of hundreds and data from a couple of
thousand flights might be available (200 airplanes and data from 5000
flights were used in \cite{Chu:11}).  Say that our goal is to find
the 3 airplanes among 200 airplanes that are the most likely to be anomal to narrow
 the airplanes that need manual inspection.
Then, we would have to
evaluate roughly
$1.3\times 10^6$ hypothesis (the number of unordered selections of 3
out of 200 airplanes). For each hypothesis, the likelihood for
the observed data would then be maximized with respect to the unknown
parameters and the most likely hypothesis accepted.
This is clearly a very computationally challenging problem. 
\end{ex}



Example \ref{ex:1}  considers anomaly detection in a large homogenous
population
and is the type of problem we are interested in
solving in this paper. 
The problem has previously been
approached using \textit{model based anomaly detection} methods, see
\eg   \cite{Chu:11,Gorinevsky12,Chandola:2009}.  This
class of anomaly detection methods is suitable to detect anomalies in
systems, as opposed to non-model based methods that are
more suitable for finding  anomalies in data. Model based anomaly
detection methods work under the assumption that the dynamics of
normal systems are the same, or equivalently, that the population of
systems is homogenous. The normal dynamics is modeled from system
observations
and most
papers  assume that an abnormal-free training data set is
available for the estimation, see for instance
\cite{abraham89,abraham79,fox72}. Some papers have been presented
to relax this assumption. In \eg \cite{Rousseeuw87}, the use of a
regression  technique robust to anomalies was suggested.

The detection of anomal systems is in
model based anomaly detection  done  by comparing  system observations
and model predictions and often done
 by a statistical test, see \eg
\cite{Eskin00,Desforges98}. However, in
non-model based anomaly detection,   classification based \cite{Tan:05,Duda00},
clustering based \cite{jain88}, nearest neighbor based \cite[Ch. 2]{Tan:05}, information
theoretic \cite{aring96} and spectral methods \cite{Parra96}  are also
common.  See  \cite{Chandola:2009} for a detailed review of
anomaly detection methods. 
Most interesting and
similar to the proposed method is the
more recent approach taken in
\cite{Chu:11,Gorinevsky12}. They simultaneously estimate the
regression model for the normal dynamics and perform
anomaly detection. The method of \cite{Chu:11} is discussed further in
the numerical section. There has also been  some  work on distributed
anomaly detection, \eg \cite{Zimmermann:2006,Chatzigiannakis06,Chu:11}.

The main contribution of the paper is a novel distributed, scalable
and model based
method for anomaly
detection in large homogenous populations. The method is distributed in the sense
that the computations can be distributed over the systems in the
population or a cluster of computers. It is scalable since the size of the
optimization problem solved on each system is independent of the number
of systems in the population.  This is made possible by a novel
formulation of the multi-hypothesis problem as a sparse problem. The
method  also shows superior
performance and is easier to tune than previously proposed model based
anomaly detection methods.  Last, the method does not need a training
data set and a regression model of the normal dynamics is estimated at
the same time as abnormal systems are detected. This is
particularly valuable since often neither a training data set or a
regression model for the normal dynamics are available.

 The remaining of the paper is organized as follows.
 Section~\ref{sec:probform} states the problem and shows the relation
 between anomaly detection and multi-hypothesis testing.
Section~\ref{sec:noncovex} reformulates the multi-hypothesis
 problem as a sparse optimization problem and Section~\ref{sec:cr} gives a
 convex formulation. The convex problem is solved in a distributed
 manner on the systems and this is discussed in
 Section~\ref{sec:Distributed}.
We return to Example \ref{ex:1} and compare to the method of
\cite{Chu:11} in Section~\ref{sec:num}.
 Finally, we conclude the paper in Section~\ref{sec:con}.


\section{Problem Statement and Formulation}\label{sec:probform}

Assume that the population of interest consists of $N$ systems. Think
for example of the $N$ airplanes studied in Example
\ref{ex:1}. Further assume  that
there is a linear unknown relation describing the relation between measurable
quantities of interest (angle of attack, the dynamic pressure, mass
variation, the stabilizer deflection angle, and the elevator
deflection in Example \ref{ex:1}):
\begin{equation}\label{eq:sensoreq}
  y_i(t)=\CC_i^\T (t)\x_{i,0}  + e_i(t), \qquad i=1,\cdots,N,
\end{equation}
where $t$ is the time index, $i$ indexing systems, $y_i(t)\in\R$ and $\CC_i(t) \in \R^{m}$
are the measurement and regressor vector at time $t$, respectively,
 $\x_{i,0}$ is the \emph{unknown} model parameter, and $e_i(t)
\in \R$ is the measurement noise.  For the $i$th system, $i=1,\cdots,N$, let
$\{(y_i(t),\CC_i(t))\}_{t=1}^{\Ti}$ denote the collected data
set and $\Ti$  the number of observations collected on each system.
We  assume that $e_i(t)$ is
white Gaussian distributed with mean zero and some \emph{unknown}  variance
$\sigma^2$ and moreover, independent of $e_j(t)$ for all $i\neq j$. However,
log-concave distributed noise could be handled with minor changes.



We will in the following say that the population behaves \textit{normally} and that
 none of the systems are abnormal if
$\x_{1,0} = \cdots = \x_{N,0}=\x_0$. Reversely, if any system has a model
 parameter deviating from the \textit{nominal} parameter value $\x_0$,
 we will consider that system as \textit{abnormal}.



To solve the problem we could argue like this: Suppose we have a
hypothesis about which $k$ systems are the anomalies. Then we could
estimate the nominal parameters $\theta_0$ by least squares from the
rest, and estimate individual $\theta_i$ for the $k$ anomalies. Since
we do not know which systems are the anomalies we have to do this for
all 
possible selections of $k$  systems  from a set of $N$. This gives a
total of
\begin{equation}
c(N,k)
=  {N !}/\big ({(N-r)! r!} \big )
\end{equation}
 possible hypotheses. To decide which  is the most likely hypothesis,
 we would evaluate the total misfit for all the systems, and choose
 that combination that gives the smallest total misfit. If we let $\gamma_j$
 be the set of assumed abnormal systems associated with the $j$th
 hypothesis $j = 1,\cdots,c(N,k)$, 
this would be equivalent to
 solving the
 the non-convex optimization problem
\begin{align}\nonumber
\minimize_{j=1,\dots,c(N,k)} &\sum_{s \in \gamma_j}\min_{\theta_{j,s}} \sum_{t=1,\cdots,\Ti} \| y_s(t) - \CC_s^\T
(t) \theta_{j,s}  \|^2\\ \label{eq:likelihood3} +&\min_{\theta_{j,0}}\sum_{s \notin \gamma_j,
  t=1,\cdots,\Ti} \| y_s(t) - \CC_s^\T (t) \theta_{j,0}  \|^2.
\end{align}
Since we assume that all systems have the same noise variance
$\sigma^2$, this is 
a formal hypothesis test.
If the systems may have different noise levels we would have to estimate these and include proper weighting in~\eqref{eq:likelihood3}.

The difficulty is how to solve ~\eqref{eq:likelihood3} when the
number of systems $N$ is large. As seen in Example~\ref{ex:1}, even
for rather small examples ($k=3,\,N=200$), the number of hypothesis
$c(N,k)$ becomes large and
the problem \eqref{eq:likelihood3}  computationally intractable. 
\section{Sparse Optimization Formulation}\label{sec:noncovex}

A key observation to be able to solve the anomaly detection problem
in a computationally efficient manner is the reformulation
of the muti-hypothesis problem \eqref{eq:likelihood3} as a sparse optimization problem.  To do
this, first
notice that the multi-hypothesis test \eqref{eq:likelihood3} will
find the $k$  systems whose data are most likely to not have been
generated from the same model as the remaining $N-k$ systems. Let us
say that $j^*$ was the selected hypothesis and denote the parameter
of the $i$th system by $\theta_i$, $i=1,\cdots,N$. Then
$\theta_{i_1} \neq \theta_{i_2}$ for all $i_1,i_2 \in \gamma_{j^*}$
and $\theta_{i_1} =\theta_{i_2}$ for all $i_1,i_2 \in
\{1,\cdots,N\}/ \gamma_{j^*}$. Note that $N-k$ systems will  have
identical parameters. An equivalent way of solving the
multi-hypothesis problem is therefore to maximize the likelihood
under the constraint that $N-k$ systems are identical. This can be
formulated as
\begin{equation}\label{eq:first}
\begin{aligned}
&\minimize_{\x_1,\cdots,\x_N,\x{}}\hspace{2mm} \sum_{i=1}^N \sum_{t=1}^{\Ti} \|y_i(t)-\CC_i^\T (t)\x_i \|^2\\
&s.t.\hspace{1mm} \bigg \| \begin{bmatrix} \|\x_1-\x\|_p
&\|\x_2-\x\|_p&
    \cdots& \|\x_N-\x\|_p \end{bmatrix} \bigg \|_0 \hspace{-0.1cm} = \hspace{-0.1cm} k,
\end{aligned}
\end{equation}
where $\theta$ denotes the nominal parameter, $\|\cdot\|_0$ is the zero-norm (pseudo norm) which counts the
number of non-zero elements of its argument. In this way, the $k$
systems most likely to be abnormal could now be identified as the ones
for which the estimated $\|\x_i-\x\|_p\neq 0$.
Note that this is exactly the
same problem
as \eqref{eq:likelihood3} and
the same hypothesis will be selected.

Since abnormal model parameters and the nominal model parameter $\theta_0$ are estimated from
the given data sets $\{(y_i(t),\CC_i(t))\}_{t=1}^{\Ti}$,
$i=1,\cdots,N$, that are subject to the measurement noise $e_i(t)$,
$i=1,\cdots,N$, they are random variables. Moreover, it is
well-known from \cite[p. 282]{Ljung:99} that if the given data sets
$\{(y_i(t),\CC_i(t))\}_{t=1}^{\Ti}$, $i=1,\cdots,N$, are
\emph{informative enough} \cite[Def. 8.1]{Ljung:99}, then for
each $i=1,\cdots,N$, the estimate of  $\x_i$ converges (as $\Ti\rightarrow\infty$) in
distribution to the normal distribution with mean $\theta_{i,0}$ and covariance
$M_i /\Ti$
where $M_i$ is a constant matrix which depends on $\theta_{i,0}$ and
the data batch $\{(y_i(t),\CC_i(t))\}_{t=1}^{\Ti}$. This implies
that as $\Ti\rightarrow\infty$, (\ref{eq:first}) will solve the  anomaly detection problem
exactly, \ie if there are $\tilde k\leq k$ systems that have  a
different model than the rest, those would be part of the hypothesis selected.

In the case where $\Ti$ is finite, our capability of
correctly detecting the anomaly working systems will be
dependent on the scale of the given $\Ti$ and $\sigma$, even with informative enough  data sets
$\{(y_i(t),\CC_i(t))\}_{t=1}^{\Ti}$, $i=1,\cdots,N$. This is due
to the fact that the variance of the estimate of $\x_i$, $i=1,\cdots,N$, decreases as $\Ti$ and $1/\sigma$
increase. As a result, larger $\Ti$ and smaller $\sigma$ allow us to detect smaller model parameter deviations and
hence increase our ability to detect abnormal behaviors.

\begin{rmk}
It should also be noted that if there is no overlap between observed
features, anomalies can not be detected even in the noise free case.
That is, if we let $Q$ be a $m \times m$-matrix with all zeros
except for element
%
%
 $(q,q)$,  which equals $1$, then if
\begin{equation*}
\varphi^\T _i(t_1) Q \varphi_j(t_2) =0,\; j=1,\cdots,i-1,i+1,\cdots,N,
\, \forall t_1,t_2,
\end{equation*}%
a deviation in element $q$ in
$\theta_{i}$ is not detectable.
It can be shown that this
corresponds to that the data is not informative enough. 
\end{rmk}

\section{Convex Relaxation}\label{sec:cr}

It follows from basic optimization theory, see for instance~\cite{boyd04}, that there exists a $\lambda>0$, also referred to as the regularization parameter, such that
\begin{multline}\label{eq:sec}
\minimize_{\x_1,\cdots,\x_N,\x{}}\hspace{2mm} \sum_{i=1}^N
\sum_{t=1}^{\Ti} \|y_i(t)-\CC_i^\T (t)\x_i \|^2 \\
 +\lambda \bigg \| \begin{bmatrix} \|\x_1-\x\|_p &\|\x_2-\x\|_p& \cdots& \|\x_N-\x\|_p \end{bmatrix} \bigg \|_0,
\end{multline}

gives exactly the same estimate for $\x_1,\cdots,\x_N,\x{},$  as
\eqref{eq:first}. However, both \eqref{eq:first} and \eqref{eq:sec}
are non-convex and combinatorial, hence unsolvable in practice.

What makes \eqref{eq:sec} non-convex is the second term. It has
recently become popular to approximate the zero-norm by its convex
envelope. That is, to replace the zero-norm by the one-norm. This is
in line with the reasoning behind Lasso \cite{Tibsharami:96} and
compressive sensing \cite{Candes:06,Donoho:06}. Relaxing the
zero-norm by replacing it with the one-norm leads to the following
convex optimization problem
\begin{align}
\minimize_{\x,\x_1,\cdots,\x_N} \hspace{2mm} \sum_{i=1}^N
\sum_{t=1}^{\Ti} 
\|y_i(t)-\CC_i^\T (t)\x_i\|^2
\label{eq:critfleet0} + \lambda \sum_{i=1}^N & \|\x-\x_i\|_p.
\end{align}%
In theory, under some conditions on $\CC_i(t)$, $k$ and the noise,
there exists a $\lambda^*>0$ such that the criterion
\eqref{eq:critfleet0} will  work  essentially as well as \eqref{eq:first} and
detect the anomal systems exactly. This is possible because
\eqref{eq:critfleet0} can be put into the form of group Lasso
\cite{Yuan06} and the theory from compressive sensing
\cite{Candes:06,Donoho:06} can therefore be applied to establish
when the relaxation is tight. This is reassuring and
motivates the proposed approach in front of  \eg \cite{Chu:11} since no
such guarantees can be given for the method presented in \cite{Chu:11}.

In practice, $\lambda$ should be chosen carefully because it decides
$k$, the number of anomal systems picked out, which correspond to
$k$ nonzeros among $\|\hat \theta-\hat \theta_i\|_p$,
$i=1,\cdots,N$. Here $\hat \theta,\hat \theta_i,i=1,\cdots,N$ denote
the optimal solution of (\ref{eq:critfleet0}). For $\lambda=0$, all
$\hat\theta_i$, $i=1,\cdots,N$, are in general different (for finite
$\Omega$) and all $N$ systems can be regarded as anomal. It can also
be shown that there exists a $\lambda^{\text{max}}>0$ (it has closed
form solution when $p=2$) such that all $\hat\theta_i$, $i=1,\cdots,N$, equal
the nominal estimate $\hat\theta$ and thus there are no anomal
systems picked out, if and only if
$\lambda\geq\lambda^{\text{max}}$. As $\lambda$ increases from 0 to
$\lambda^{\text{max}}$, $k$ decreases piecewise from $N$ to 0. To
tune $\lambda$, we consider:
\begin{itemize}

\item if $k$ is known, $\lambda$ can be tuned by trial and error
such that solving (\ref{eq:critfleet0}) gives exactly $k$ anomal
systems. Note that making use of $\lambda^{\text{max}}$ can save the
tuning efforts.

\item if $k$ is unknown and no prior knowledge other than the given data is available,
the tuning of $\lambda$, or equivalently the tuning of $k$, becomes
a model structure selection problem with different model complexity
in terms of the number of anomal systems. This latter problem can
then be readily tackled by classical model structure selection
techniques, such as Akaike's information criterion (AIC), Bayesian
information criterion (BIC) and cross validation. If necessary, a
model validation process, \eg \cite[p. 509]{Ljung:99}, can be
employed to further validate whether the chosen $\lambda$, or
equivalently the determined $k$ is suitable or not.
\end{itemize}
In what follows, we assume a suitable $\lambda$ has been found and
focus on how to solve (\ref{eq:critfleet0}) in a distributed way.

\begin{rmk}
The solutions from solving the problem in either \eqref{eq:first} or
\eqref{eq:sec} are indifferent to the choice of $p$. However, this
is not the case for the problem in \eqref{eq:critfleet0}. In
general, $p=1$ is a good choice if one is interested in detecting
anomalies in individual elements of the parameter. On the other
hand, $p>1$ is a better choice if one is interested in detecting
anomalis in the parameter as a whole.
\end{rmk}
\section{An ADMM Based Distributed Algorithm}\label{sec:Distributed}

Although solving (\ref{eq:critfleet0}) in a centralized way provides
a tractable solution to the anomaly detection problem, it can be \emph{prohibitively
expensive} to implement on a single system (computer)  for large
populations (large $N$). Indeed, in the centralized case, an optimization problem with $(N+1)m$
optimization variables and all the data would have to be solved. As the number of
systems $N$ and the number of data $\Omega$ increase, this will lead
to increasing computational complexity in both time and storage.
Also note that many collections of systems (populations) are naturally distributed and lack a central computational unit.

In this section, we further investigate how to solve
\eqref{eq:critfleet0} in a distributed way based on ADMM. An
advantage of our distributed algorithm is that each system
(computer) only requires to solve a sequence of optimization
problems with only $2m$ (independent of $N$) optimization variables.
It requests very low storage space and in particular, it does not
access the data collected on the other systems. Another advantage is
that it can guarantee the convergence to the centralized solution.
In practice, it can converge to modest accuracy within a few tens of
iterations that is sufficient for our use, see [Boyd, 2011].

We will follow the  procedure given  in \cite[Sect.
3.4.4]{bert:97} to derive the ADMM algorithm. First define
\begin{align}\label{eq:x} x&=\begin{bmatrix}  \x_1^\T &\x_2^\T
&\cdots &\x_N^\T & \x^\T
\end{bmatrix}^\T\in\R^{(N+1)m}.\end{align}
The optimization problem \eqref{eq:critfleet0} can then be rewritten
as \begin{align} \minimize_x \hspace{2mm} G(x).
\label{eq:critfleet2}
\end{align} Here, $G(x)$ is a convex function defined as \begin{align}
G(x) =  \sum_{i=1}^N \|Y_i-\Phi_i\x_i\|^2+ \lambda
\|\x-\x_i\|_p
\end{align}
where for $i = 1, \cdots, N$,
\begin{align}\nonumber\label{eq:compdata}
        & Y_i = \begin{bmatrix}
                            y_i(1) &
                            y_i(2) &
                            \cdots &
                            y_i(\Ti)&
                          \end{bmatrix}^\T
                       ,\\ &\Phi_i =
                                           \begin{bmatrix}
                                             \CC_i(1) &
                                             \CC_i(2) & \cdots & \CC_i(\Ti)
                                           \end{bmatrix}^\T
                                         .
\end{align}
As noted in \cite[p. 255]{bert:97}, the starting point of deriving
an ADMM algorithm for the optimization problem (\ref{eq:critfleet2})
is to put (\ref{eq:critfleet2}) in the following form
\begin{align}&\minimize_{x} G_1(x) + G_2(Ax)
\label{eq:critfleet2t}
\end{align} where $G_1:\R^{(N+1)m}\rightarrow
\R$, $G_2:\R^{q}\rightarrow \R$ are two convex functions and $A$ is
a suitably defined $q\times (N+1)m$ matrix. The identification of
$G_1(\cdot),G_2(\cdot)$ and $A$ from (\ref{eq:critfleet2}) is
crucial for a successful design of an ADMM algorithm. Here, the main
concern is two fold: first, we have to guarantee that the two
induced alternate optimization problems are separable with respect
to each system; second, we have to guarantee $A^{\T} A$ is nonsingular
so that the derived ADMM algorithm is guaranteed to converge to the
optimal solution of (\ref{eq:critfleet2}). We will get back to the
convergence of proposed algorithm  later in the section.

Having the above concern in mind, we identify
\begin{align}\label{eq:g1} G_1(x) &= 0,   \\\label{eq:g2}
  G_2(z)&=\sum_{i=1}^N \|Y_i-\Phi_i\alpha_i\|^2 + \lambda
\|\beta_i-\alpha_i\|_p,
\end{align} where \begin{align}\label{eq:z}
z&=\begin{bmatrix}  \alpha_1^\T &\alpha_2^\T &\cdots &\alpha_N^\T &
\beta_1^\T &\beta_2^\T &\cdots &\beta_N^\T
\end{bmatrix}^\T \in\R^{2Nm}.
\end{align}
From (\ref{eq:g1}) to (\ref{eq:z}), we have $G(x)=G_1(x)+G_2(Ax)$,
with
\begin{align}\label{eq:A}  A = \begin{bmatrix}
I_{Nm} & 0 & \cdots & 0\\
0 & I_m & \cdots  & I_m
\end{bmatrix}^\T\in\R^{2Nm\times(N+1)m}
\end{align} where $I_{Nm}$ and $I_m$ are used to denote $Nm$ and $m$ dimensional
identity matrix, respectively.
Now the optimization problem (\ref{eq:critfleet2t}) is equivalent to
the following one
\begin{equation}
\begin{aligned}&\minimize_{x,z}
G_1(x) + G_2(z)\\
&\hspace{4mm}\subject \hspace{6mm} Ax=z.\label{eq:critfleet2b}
\end{aligned}\end{equation}
Following \cite[p.~255]{bert:97}, we assign a Lagrange multiplier
vector $\nu\in\R^{2Nm}$ to the equality constraint $Ax=z$ and
further partition $\nu$ as \begin{align}\label{eq:nu} \nu =
\begin{bmatrix} u_1^\T & u_2^\T & \cdots & u_N^\T & w_1^\T & w_2^\T &
\cdots & w_N^\T
\end{bmatrix}^\T\in\R^{2Nm}
\end{align} where for $i=1,\cdots,N$,
$u_i\in\R^m,w_i\in\R^m$. Moreover, we consider the augmented
Lagrangian function
\begin{equation}
\begin{aligned}\label{eq:Aug} L_{\rho}(\x,x,\nu) &=G_1(x) + G_2(z) +\nu^\T(Ax-z) \\
& + (\rho/2) \| Ax-z \|_2^2.
\end{aligned}\end{equation}%
Then according to \cite[(4.79) -- (4.81)]{bert:97},  ADMM can be used
to approximate the solution of \eqref{eq:critfleet2b} as follows:
\begin{subequations}\label{eq:admmstand}\tiny
\begin{align}
& x^{(k+1)} = \argmin_{x} \left\{G_1(x) + (\nu^{(k)})^\T Ax +
(\rho/2) \|Ax-z^{(k)} \|_2^2 \right\},\\
&z^{(k+1)} = \argmin_{z} \left\{G_2(z) -
(\nu^{(k)})^\T z + (\rho/2)\|Ax^{(k+1)}-z\|_2^2 \right\},\\
&\nu^{(k+1)} = \nu^{(k)} + \rho(Ax^{(k+1)}-z^{(k+1)}),
\end{align}
\end{subequations} where $\rho$ is any positive number and the initial vectors
$\nu^{(0)}$ and $z^{(0)}$ are arbitrary. Taking into account
(\ref{eq:x}) to (\ref{eq:nu}), (\ref{eq:admmstand}) can be put into
the following specific form
\begin{subequations}\label{eq:admmstand2}\tiny
\begin{align}
& \x_i^{(k+1)} = \alpha_i^{(k)} - u_i^{(k)}/\rho,\\
& \x^{(k+1)} = (1/N) \sum_{i=1}^N \beta_i^{(k)} - w_i^{(k)}/\rho,\\
&\{\alpha_i^{(k+1)},\beta_i^{(k+1)}\} = \argmin_{\alpha_i,\beta_i}
\|Y_i-\Phi_i\alpha_i\|^2+ \lambda
\|\beta_i-\alpha_i\|_p - (u_i^{(k)})^\T\alpha_i\nonumber\\
&\qquad- (w_i^{(k)})^\T\beta_i + (\rho/2) \|\x_i^{(k+1)}-\alpha_i\|_2^2+ (\rho/2) \|\x^{(k+1)}-\beta_i\|_2^2, \\
&u_i^{(k+1)} = u_i^{(k)} + \rho(\x_i^{(k+1)}-\alpha_i^{(k+1)}),\\
&w_i^{(k+1)} = w_i^{(k)} + \rho(\x^{(k+1)}-\beta_i^{(k+1)}),
\end{align}
\end{subequations} where $i=1,\cdots,N$.
It is worth to note that computations in (\ref{eq:admmstand2}) are
separable with respect to each system. Therefore, we yield the
following ADMM based distributed algorithm to the optimization
problem \eqref{eq:critfleet0}:
\begin{alg}\label{alg:admm}
On the $i$th system, $i=1,\cdots,N$, do the following:
\begin{enumerate}
\vspace{-0.15cm}
\item Initialization: set the values of $\alpha_i^{(0)}, \beta_i^{(0)}, u_i^{(0)}, w_i^{(0)}$, $\rho$ and $k= 0$.
\item $ \x_i^{(k+1)} = \alpha_i^{(k)} - u_i^{(k)}/\rho$
\item Broadcast $\beta_i^{(k)}$, $w_i^{(k)}$ to the other systems, $j=1,\cdots,i-1,i+1,\cdots, N$.
\item $\x^{(k+1)} = (1/N) \sum_{i=1}^N \beta_i^{(k)} - w_i^{(k)}/\rho$
\item 
$\{\alpha_i^{(k+1)},\beta_i^{(k+1)}\} = \argmin_{\alpha_i,\beta_i}\big\{
\|Y_i-\Phi_i\alpha_i\|^2\\+ \lambda
\|\beta_i-\alpha_i\|_p - (u_i^{(k)})^\T\alpha_i
- (w_i^{(k)})^\T\beta_i \\+ (\rho/2) \|\x_i^{(k+1)}-\alpha_i\|_2^2+ (\rho/2) \|\x^{(k+1)}-\beta_i\|_2^2\big\}\nonumber$
\item $u_i^{(k+1)} = u_i^{(k)} + \rho(\x_i^{(k+1)}-\alpha_i^{(k+1)})$
\item $w_i^{(k+1)} = w_i^{(k)} + \rho(\x^{(k+1)}-\beta_i^{(k+1)})$
\item Set $k=k+1$ and return to step 2.
\end{enumerate}
\end{alg}


\subsection{Convergence of Algorithm \ref{alg:admm}}\label{alg:converg}

It is interesting and important to investigate if Algorithm
\ref{alg:admm} would converge to the optimal solution of the
optimization problem (\ref{eq:critfleet0}). The answer is
affirmative. We have the following theorem to guarantee the
convergence of Algorithm \ref{alg:admm}, which is a straightforward
application of \cite[Ch. 3, Prop. 4.2]{bert:97} to the optimization
problem~(\ref{eq:critfleet2t}).

\begin{thm}\label{thm:conv}
Consider  (\ref{eq:critfleet2t}). The sequences $\{ x^{(k)} \}$, $\{
z^{(k)} \}$ and $\{ \nu^{(k)} \}$ generated by the ADMM based
distributed Algorithm \ref{alg:admm} for any $\rho>0$, initial
vectors $\nu^{(0)}$ and $z^{(0)}$, converge. Moreover, the sequence
$\{ x^{(k)} \}$ converges to an optimal solution of the original
problem (\ref{eq:critfleet2t}). The sequence $\{Ax^{(k)}- z^{(k)}
\}$ converges to zero, and $\{ \nu^{(k)} \}$ converges to an optimal
solution of the dual problem of (\ref{eq:critfleet2t}).
\end{thm}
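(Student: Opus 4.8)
The plan is to verify that the reformulation \eqref{eq:critfleet2b} meets the hypotheses of the ADMM convergence result \cite[Ch.~3, Prop.~4.2]{bert:97} and then invoke that result verbatim; no new machinery is required. Most of the groundwork is already laid: the convex problem \eqref{eq:critfleet0} has been recast as $\minimize_x G_1(x)+G_2(Ax)$ in \eqref{eq:critfleet2t}, equivalently as the linearly constrained problem \eqref{eq:critfleet2b} with $G_1$, $G_2$, $A$ specified in \eqref{eq:g1}--\eqref{eq:A}, and the iteration \eqref{eq:admmstand2} defining Algorithm~\ref{alg:admm} is exactly the specialization of the generic recursion \eqref{eq:admmstand} $=$ \cite[(4.79)--(4.81)]{bert:97} to this data. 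Hence the proof reduces to checking the standing assumptions of the cited proposition and reading off its conclusion.

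First I would verify the regularity conditions on $G_1$ and $G_2$. Here $G_1\equiv 0$ is trivially closed proper convex, and for each $i$ the summand $\|Y_i-\Phi_i\alpha_i\|^2 + \lambda\|\beta_i-\alpha_i\|_p$ is a convex quadratic plus a (convex) $p$-norm; being finite and continuous on all of $\R^{2Nm}$, $G_2$ is closed proper convex. Next I would argue existence of an optimal primal--dual pair (a saddle point of the augmented Lagrangian \eqref{eq:Aug}). The constraint $Ax=z$ is affine and always feasible (given any $x$, take $z=Ax$), and the objective is convex, nonnegative, and coercive modulo its lineality space $\{(d_1,\dots,d_N,d_0): d_i=d_0\in\bigcap_i \ker\Phi_i\}$, along which it is constant; hence its minimum is attained, and strong duality for an affine-constrained convex program supplies an optimal multiplier.

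The linchpin is the rank condition flagged in Section~\ref{sec:Distributed}, namely that $A^\T A$ be nonsingular, which is exactly what upgrades the conclusion from convergence of objective values to convergence of the primal iterates $\{x^{(k)}\}$ themselves, and is the only place the particular structure of $A$ enters. From \eqref{eq:A}, $A$ sends $x=[\x_1^\T\cdots\x_N^\T\,\x^\T]^\T$ to $[\alpha_1^\T\cdots\alpha_N^\T\,\beta_1^\T\cdots\beta_N^\T]^\T$ via $\alpha_i=\x_i$ and $\beta_i=\x$, so a direct block computation gives
\begin{equation*}
A^\T A = \begin{bmatrix} I_{Nm} & 0\\ 0 & N\,I_m \end{bmatrix},
\end{equation*}
because stacking $N$ copies of $I_m$ and contracting yields $\sum_{i=1}^N I_m = N I_m$. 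This matrix is positive definite, hence nonsingular; as a by-product the $x$-step in \eqref{eq:admmstand2} has a unique minimizer (the $z$-step is automatically unique by strong convexity in $z$), so the closed-form updates for $\x_i$ and $\x{}$ are genuinely the required argmin.

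With these assumptions in hand I would simply quote \cite[Ch.~3, Prop.~4.2]{bert:97} to conclude that $\{x^{(k)}\}$, $\{z^{(k)}\}$ and $\{\nu^{(k)}\}$ converge, that $\{Ax^{(k)}-z^{(k)}\}\to 0$, and that the limits of $\{x^{(k)}\}$ and $\{\nu^{(k)}\}$ are optimal for \eqref{eq:critfleet2t} and its dual. I expect the main obstacle to be structural rather than analytical: the real content is that the splitting \eqref{eq:g1}--\eqref{eq:A} must be engineered so that $A$ has full column rank \emph{while} keeping $G_2$, and hence the $z$-update, separable across systems. Once the block form of $A^\T A$ above is in place, both concerns are settled simultaneously and the remaining work is the routine verification of convexity, feasibility, and faithfulness of Algorithm~\ref{alg:admm} to the cited recursion.
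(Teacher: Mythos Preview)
Your proposal is correct and follows exactly the same approach as the paper's own proof: invoke \cite[Ch.~3, Prop.~4.2]{bert:97} after checking that the optimal solution set of \eqref{eq:critfleet2t} is nonempty and that $A^\T A$ is nonsingular. The paper dismisses both checks as ``clearly satisfied,'' whereas you spell them out (in particular the block computation $A^\T A=\mathrm{diag}(I_{Nm},\,N I_m)$), but the argument is the same.
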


\begin{proof}
According to \cite[Ch. 3, Prop. 4.2]{bert:97}, we only need
to check the optimal solution set of problem (\ref{eq:critfleet2t})
is nonempty and $A^{\T}A$ is invertible. These two assumptions are
clearly satisfied. So the conclusion follows.
\end{proof}

\begin{rmk}
In practice, Algorithm \ref{alg:admm} should be equipped with
certain stopping criteria so that the iteration is stopped when a
solution with satisfying accuracy is obtained. Another issue with
Algorithm \ref{alg:admm} is that its converge rate may be slow.
Faster convergence rate can be achieved by updating the penalty
parameter $\rho$ at each iteration. Here, the stopping criterion in
\cite[Sec. 3.3]{boyd:11} and updating rule of $\rho$ in \cite[Sec.
3.4.1]{boyd:11} are used in Section \ref{sec:num}.

\end{rmk}

\section{Numerical Illustration}\label{sec:num}


In this section, we return to the aircraft anomaly detection problem
discussed in Example \ref{ex:1} and illustrate how Algorithm
\ref{alg:admm} can distributedly detect abnormal systems. In
particular, we consider a fleet of 200 aircrafts and have access to
data from 500 flights. These data are assumed to be related through
the linear relation (\ref{eq:sensoreq}) where
$m=4,N=200,\Omega=500$, and for the $i$th aircraft at the $t$th
flight, $y_i(t)$ represents the angle-of-attack and $\varphi_i(t)$
consists of mass variations, dynamic pressure, the stabilizer
deflection angle times dynamic pressure and the elevation deflection
angle times dynamic pressure, and $e_i(t)$ is assumed to be Gaussian
distributed with mean 0 and variance 0.83, \ie $e_i(t)\sim\mathcal
N(0,0.83)$.

Similar to \cite{Chu:11}, to test the robustness of the proposed approach, we allow some nominal variation and generate
the nominal parameter $\theta_{i,0}$ for a normal aircraft as a
random variable as $\theta_{i,0} \sim \mathcal N(\bar
\theta,\Sigma_\theta)$ with
\begin{align*}
&\bar \theta = \begin{bmatrix} 0.8 \\ -2.7 \\ -0.63 \\ 0.46
\end{bmatrix}, \quad \Sigma_\theta = \begin{bmatrix} 0.04 & 0.12 &
-0.02 & 0.02\\0.12 & 0.84 & -0.09 & 0.1\\ -0.02 & -0.09 & 0.03 & 0\\
0.02 & 0.1 & 0 & 0.05  \end{bmatrix}.
\end{align*}
To simulate the anomal aircrafts in the fleet, we generate
$\theta_{i,0}$ for an anomal aircraft as a random variable as
$\theta_{i,0} \sim \mathcal N(\tilde \theta,\Sigma_\theta)$ with
$\tilde\theta = \begin{bmatrix} 3.5 &
 -0.1 & -3 & 0.001 \end{bmatrix}^\T$. Moreover, aircrafts with tags 27, 161 and 183 were
simulated as anomal. The regressor $\varphi_i(t)$ for all aircrafts
at all flights is generated as a random variable as
$\varphi_i(t)\sim \mathcal N(\bar \phi, \Sigma_\phi)$ with
\begin{align*}
\bar \phi = \begin{bmatrix} 0.95 \\ -1.22 \\ -2.79 \\ 7.11
\end{bmatrix}, \quad \Sigma_\phi = \begin{bmatrix} 0.25 & -0.02 &
0.12 & -0.04\\-0.02 & 0.45 & 0.03 & -0.52\\ 0.12 & 0.03 & 1.05 &
-1.26\\ -0.04 & -0.52 & -1.26 & 3.89  \end{bmatrix}.
\end{align*}
It should be noted that this simulation setup was previously
motivated and discussed in \cite{Chu:11,Gorinevsky12}.

Fig. \ref{fig:FigureSON} shows the centralized solution (top  plot) and
decentralized solution (bottom plot) of
(\ref{eq:critfleet0}). The solutions are close to identical, which is
expected given the convergence result of Section~\ref{alg:converg}. 
The achieved performance of Algorithm \ref{alg:admm} was obtained
within 15 iterations and $\lambda$ was set to 150 to pick out the
three most likely airplanes to be abnormal. The aircrafts for which $\|\hat\theta_i -
\hat\theta\| \neq 0$ were  the 27th, 161st and 183rd
aircraft, which also were the true abnormal aircrafts.

Fig. \ref{fig:FigureRR} illustrates the performance of the method
presented in \cite{Chu:11} with three
different regularization parameters, namely, 10, 100 and 400, shown
from the top to bottom respectively. 
As seen in the plots,  $\|\hat\theta_i -
\hat\theta\| \neq 0$ also for normally working aircrafts leading to a
result which is  difficult
to interpret. In fact, $\|\hat\theta_i -
\hat\theta\|,\, i=1,\dots, N,$ will in general always be greater than zero, independent of the
choice of regularization parameter.  This is a well known result (see
for instance \cite[Sect. 3.4.3]{Hastie:01}) and
follows from the use of Tikhonov regularization in
\cite{Chu:11}. We use a sum-of-norms regularization (which essentially
is a $\ell_1$-regularization on norms) and solving
(\ref{eq:critfleet0})  therefore gives, as long as the regularization parameter is large enough,
 $\|\hat\theta_i - \hat\theta\| = 0$ for some $i$s.  The result of
\cite{Chu:11} could of course be thresholded (the solid red line in
Fig. \ref{fig:FigureRR} shows a suitable threshold). However, this adds an
extra tuning parameter and makes the tuning considerably more
difficult than for proposed method.







\begin{figure}
\begin{center}
\includegraphics[width=8.5cm]{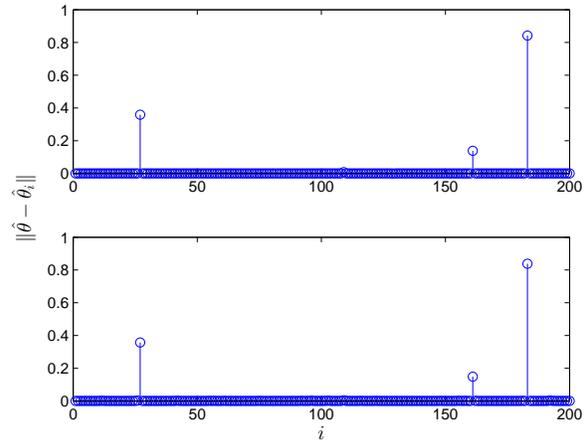}    
\vspace{-0.7cm} \caption{
The difference between the estimates of the nominal model parameter $\hat\theta$ and the
parameter $\hat\theta_i$  for the $i$th aircraft,
$i=1,\cdots,N$. The top plot shows the solution of  \eqref{eq:critfleet0} computed by a centralized
algorithm. The plot on the bottom illustrates the distributed solution computed
by Algorithm~\ref{alg:admm}.  Both the centralized and decentralized
solution were computed using a regularization parameter, $\lambda$, set to $150.$  
}
\label{fig:FigureSON}
\end{center}
\end{figure}

\begin{rmk}
A centralized algorithm for solving (\ref{eq:critfleet0}) would for
this example have to solve an optimization problem with 804
optimization variables. If  Algorithm~\ref{alg:admm} is used to
distribute the computations, 
an
optimization problem with 8 optimization variables would have to be
solved on each system (computer). This comparison
illustrates that Algorithm~\ref{alg:admm} imposes 
a much cheaper computational cost per iteration on each computer. 
In addition, the number of optimization variables is invariant to the number of systems and data.
Hence, Algorithm~\ref{alg:admm} provides a
computationally tractable and scalable solution to anomaly
detection in large homogenous populations.

\end{rmk}
\begin{figure}[t]
\begin{center}
\includegraphics[width=8.5cm]{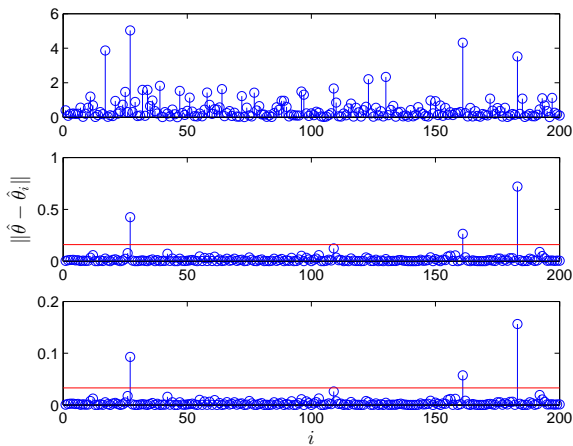}    
\vspace{-0.7cm} \caption{
The difference between the estimates of the nominal model parameter $\hat\theta$ and the
parameter $\hat\theta_i$  for the $i$th aircraft,
$i=1,\cdots,N$. The plots show the solution obtained by the method presented in
\cite{Chu:11} with regularization parameters 10, 100 and 400,
respectively from the top to bottom. 
}
\label{fig:FigureRR}
\end{center}
\end{figure}



\begin{rmk}
We used 500 flights instead of 5000 flights (5000 flights were used in
Example \ref{ex:1} and \cite{Chu:11}) to make the estimation problem more
challenging. Notice that the anomaly detection problem  still leads to
a computationally challenging
multi-hypothesis problem with roughly $1.3 \times 10^6$ hypothesis. 
\end{rmk}

\section{Conclusion}\label{sec:con}



This paper has presented a novel distributed, scalable and model based
approach to anomaly detection
for large populations. The motivation for the presented
approach is:
\begin{itemize}
\item it leads to a scalable approach to anomaly detection that can
  handle large populations,
\item it provides a purely distributed method for detecting anomaly working
systems in a collection of systems,
\item the method does not require a training data set, and
\item the algorithm is theoretically motivated by the
results derived in the field of compressive sensing.
\end{itemize}
The algorithm is based on ideas from system identification and
distributed optimization.
Basically the anomaly detection problem is first formulated
as a sparse optimization problem. This combinatorial multi-hypothesis problem can not be solved for practically interesting sizes of data
and a relaxation is therefore proposed. The convex relaxation can be
written as a group Lasso problem and theory
developed for Lasso and compressive sensing can therefore
be used to derive theoretical bounds for when the relaxation is tight.

{\small
\begin{ack} This work is partially supported by the Swedish Research
  Council in the Linnaeus center CADICS, the Swedish department of education within the ELLIIT project and the European Research Council
  under the advanced grant LEARN, contract 267381. Ohlsson is also
  supported by  a postdoctoral grant from the Sweden-America
  Foundation, donated by ASEA's Fellowship Fund, and by postdoctoral grant from the Swedish Science Foundation.
\end{ack}
}


{\small
\bibliography{refHO}}
\end{document}